\newtheorem{definition}{Definition}[section]
\newtheorem{remark}{Remark}[section]
\newtheorem{theorem}{Theorem}[section]
\newtheorem{lemma}{Lemma}[section]
\newcommand{\todo}[1]{\textcolor{red}{[#1]}}
\DeclareMathOperator*{\argmax}{argmax}
\DeclareMathOperator{\R}{\mathbb{R}}
\newcommand{\narms}{N}
\newcommand{\arms}{\mathcal{N}}
\newcommand{\pfm}{V}
\newcommand{\regret}{R_{T}}
\newcommand{\est}{\hat{\mu}}
\newcommand{\PayToAddNoise}{\textsc{CBwHeterogeniety}}
\newcommand{\PaymentProfile}{\textsc{CBwPayments}}
\newcommand{\LimitedBudget}{\textsc{CBChainedRestricted}}
\newcommand{\UnrestrictedBudget}{\textsc{CBChainedUnrestricted}}
\title{Incentivising Exploration and Recommendations for Contextual Bandits with Payments}
\author{
Priyank Agrawal\\ 
University of Illinois Urbana-Champaign\and
Theja Tulabandhula\\
University of Illinois Chicago\\
}
\begin{document}

\maketitle

\begin{abstract}
We propose a contextual bandit based model to capture the learning and social welfare goals of a web platform in the presence of myopic users. By using payments to incentivize these agents to explore different items/recommendations, we show how the platform can learn the inherent attributes of items and achieve a sublinear regret while maximizing cumulative social welfare. We also calculate theoretical bounds on the cumulative costs of incentivization to the platform. Unlike previous works in this domain, we consider contexts to be completely adversarial, and the behavior of the adversary is unknown to the platform. Our approach can improve various engagement metrics of users on e-commerce stores, recommendation engines and matching platforms.
\end{abstract}

\section{Introduction}\label{sec:introduction}

In several practical applications such as recommendation systems (mobile health apps, Netflix, Amazon product recommendations) and matching platforms (Uber, Taskrabbit, Upwork, Airbnb), the platform/firm has to learn various system parameters to optimize resource allocation while only \emph{partially} being able to control learning rates. This is because, the users who transact on such platforms can take autonomous actions that maximize their own utility based on potentially inaccurate information, sometimes to the detriment of the learning goals. 

It is well known that users are influenced by the ratings and reviews of previous users provided by the platform while making their purchase decisions on e-commerce platforms. While the platform can reveal such attributes of different items it sells, a myopic user's decision based on these attributes can be sub-optimal if attributes have not been learned well enough from previous transactions. Because of the positive feedback loop, the platform's estimates of these attributes may be very different from their true values, leading to loss of social welfare. While users are myopic, the platform tends to be long-term focused, and has to incentivise its users through discounts, promotions and other controls to learn these attributes accurately and increase the overall social welfare.

Similarly in the area of mobile health apps (e.g., for chronic care management, fitness \& general health, medication management) incentivization in learning can help the app serve users better, but might get impeded by users being immediate reward focused. Here, the platform typically sends recommendations for users to partake in activities with the goal of improved health outcomes~\cite{dantzig2013toward}. The quality of recommendations can be high if the platform knows the utility model of the users and their preferences for different activities. To learn these preferences, the platform could devise incentives to nudge the user to prefer a different activity than their currently preferred choice, where the latter is based on current low quality recommendations. If it can restrict the amount of nudging while still being able to learn enough to give good activity recommendations (based on what it has learned so far), then all users will be better off.

In the above two applications and many others, the platform's goal is to maximize social welfare of the myopic users by learning the system parameters just enough to make the best recommendations (or equivalently, ensuring that the users take the best actions for their contexts) over time, when compared to the clairvoyant benchmark of making recommendations when the system parameters are known. The paper focuses on modeling a \emph{principal-agent} variation of online learning  in the \emph{contextual bandit} setting that allows the platform (principal) to use payments as auxiliary controls. Typically, the platform needs to give payments (which are costly) since in most practical settings the choices of the users based on current data may not be exploratory enough. Our objective then is to design such payments schemes that allow learning and improving social welfare, while simultaneously not costing too much to the platform.

Contextual bandits, a popular framework to learn and maximize revenue in online advertising and recommendation domains~\cite{lattimore2018bandit,bietti2018contextual,riquelme2018deep}, are problems where users are modeled as contexts (feature vectors) and the learner picks an action tailored to the context for which it is rewarded (bandit feedback). The methods developed here learn the parameters of the reward generation model while simultaneously exploiting current information on the quality of the arms (popular algorithms include EXP4, $\epsilon$-greedy, RegCB etc). While limited in their expressivity compared to Markov Decision Processes (MDPs)  (there are no states), they tend to capture learning problems where the reward for an action (such as purchasing an item or walking for 10 minutes or standing up) has an immediate outcome (such as a positive utility or a better mood) fairly accurately. While MDPs are also a suitable approach, they are typically harder to learn and analyse theoretically. 

Only a few works have considered the principal-agent variations which involves incentivization in learning through payments or otherwise in the recent past. In ~\cite{chen2018incentivizing} show that a constant amount of payments is enough if the users are heterogeneous, however, in their setting the platform is aware of the arriving contexts and the distribution from which contexts are drawn. The role of user heterogeneity is further explored in~\cite{bastani2017mostly} and ~\cite{kannan18} as \emph{covariate diversity}. In the former work, the authors consider contexts to be stochastic and prove that myopic arm selection is enough for certain distributions of contexts when the number of arms is two, while in the later, the authors use controlled and known perturbations to the contexts and show that greedy (myopic) selection of arms gives sub-linear regret. In~\cite{kannan2017fairness}, the authors propose a randomized algorithm without an explicit user heterogeneity criteria. However, their technique requires use of ridge estimator to estimate arm attributes leading to unbiassed estimates.

A related but orthogonal approach is pursued in~\cite{mansour2015bayesian,mansour2016bayesian,cohen2018optimal,immorlica2019bayesian}, where the authors consider principal-agent settings but only allow the use of information asymmetry under incentive compatibility constraints to explore, unlike payments in our setting. A similar setting was also investigated in~\cite{immorlica2018incentivizing} where they explore various unbiased disclosure policies that the platform can use to explore. In ~\cite{han2015incentivizing} the authors also consider a principal-agent setting, and assuming that the principal knows the distribution from which the contexts arrive as well as that each arm is preferred by at least some contexts, provide regret and payment bounds for an incentivization algorithm (building on their earlier results in ~\cite{frazier2014incentivizing}). In a vanilla multi-armed bandit setting, the authors in ~\cite{wang2018multi} have studied how payments can help explore and achieve sublinear regret. 

\textit{Main contributions:} First, we propose a contextual bandit based principal-agent model where payments can be used as auxiliary controls to induce exploration and learning. Second, we develop qualitative and quantitative characterization of payments as means of ensuring exploratory behaviour by agents. We develop a novel algorithm and show that the expected aggregate payments it makes in such regimes is sub-linear in the time horizon $T$. Finally, we compare regret performance and payments requirements of our approach and other competitors on both synthetic and real datasets. We find that the greedy approach with no payments (i.e., the platform does not explore at all) work well with real data, however, there are synthetic data instances where its regret performance is consistently surpassed by algorithms such as ours.  Our proposed algorithm works with the most general agent behavior (adversarial contexts), moreover, the payments scheme does not require the principal to have the knowledge of the current context (see section \ref{sec:problem_statement}).

\section{Problem Statement}\label{sec:problem_statement}
Users (or agents) arrive sequentially over a period $T$ on a platform $\pfm$ and make choices.  The context vector corresponding to an agent arriving at time step $t \in [T]$ is represented as $\theta_t \in \R^d$ (w.l.o.g. assume $\lVert\theta_t\rVert_2 \leq 1$). Each choice is represented as an arm $i \in \arms$ (with $|\arms|=\narms$), which is associated with a fixed $d$-dimensional attribute vector $\mu_i$ (w.l.o.g. assume $\lVert\mu_i\rVert_2 \leq 1$). We can think of each coordinate of $\mu_i$ as an attribute of arm $i$ that may influence the user to choose it over the others. True arms attributes are unknown to both platform and the agents a priori, and the platform shows its estimate of these attributes to arriving agents.\\

\noindent{\textbf{User choice and reward model:}} The user choice behavior is myopic in nature: she is presented with the empirical estimates of $\{\mu_i\}_{i \in \arms}$: $\{\hat{\mu}_i\}_{i \in \arms}$, corresponding to the arms available on the platform (e.g., via metadata, tags or auxiliary textual information) and then she makes a singleton choice. In this notation, $\est^t_i$ denotes the latest estimate for the arm $i$ available at the time $t$. She may have a random utility for each arm $i$, whose mean is $\theta_t.\mu_i$ (an inner product), where $\theta_t$ is her context vector. Given these utilities, she picks an arm with the highest perceived utility. In the special case where there is no randomness in the utilities, then her decision is simply $\argmax_{j \in \arms} \theta_t.\hat{\mu}_j$. For simplicity, we will work under this restriction for the rest of the paper. Let the chosen arm be denoted as $i_t$ at round $t$. The reward accrued by the user is $\theta_t.\mu_{i_t}$.\\

\noindent{\textbf{Feedback model:}} Although the platform keeps track of all interaction history, it can only observe the context after the agent has arrived on the platform. The platform computes and displays the empirical estimates $\{\hat{\mu}_i\}_{i \in \arms}$ based on the measurements it is able to make. The measurements include the context of the user that arrived and the random utility that she obtained: $y_t = \theta_t.\mu_{i_t} + \eta_t$, where $\eta_t$ is a zero mean i.i.d. sub-Gaussian noise random variable. The platform estimates $\{\hat{\mu}_i\}_{i \in \arms}$ by using the observed contexts and the reward signals for each arm at each time step, most often by solving a regression problem. Some useful notations are as follows: $\Theta$ is the $T\times d$-dimensional \textit{design matrix} whose rows are the contexts $\theta_t$. Also $\forall i \in \arms$, $ S_{i,t} := \{ s < t \vert i_s = i \}$. Further, $\Theta(S_{i,t})$ represents the design matrix corresponding to the contexts arriving at the time steps denoted by $S_{i,t}$, and $Y(S_{i,t})$ denotes the collection of rewards corresponding to these contexts at time steps $S_{i,t}$.\\ 

\noindent{\textbf{Learning objective:}} The platform incurs an instantaneous regret $r_t$ if the arm picked by the user is not the best arm for that user. That is, $r_t = \max_{j}\theta_t.\mu_j - \theta_t.\mu_{i_t}$. The goal of the platform is to reduce the expected cumulative regret $\regret = \mathbb{E}[\sum_{t=1}^{T}r_t]$ over the horizon $T$. Intuitively, if the platform had the knowledge and could display the true attributes of the arms, then the users would pick the items that are best suited to them, and the cumulative regret would be zero. But since the platform does not know the attributes of the arms a priori and the users are acting myopically, it has to incentivise some of these users to explore (based on the history of contexts and rewards generated thus far). The platform does so by displaying a payment/discount vector $\mathbf{p}^t$ in addition to the estimated arm attributes. The corresponding user's decision is $\argmax_{j \in \arms} (\theta_t.\hat{\mu}_j+p^t_{j})$. The goal of the platform is to design incentivization schemes that minimize the cumulative regret, while keeping the total payments made as small as possible. We assume all ties to be broken arbitrarily. Hence at each round $t$, an agent with context $\theta_t$ (unknown to the platform when it is deciding payments) arrives on the platform. The platform presents the agent with arm estimates $\{\hat{\mu}_i\}_{i \in \arms}$ and a payment vector $\mathbf{p}^t$. The agent makes a singleton choice, thereby accruing some reward. The platform observes the context and a noisy measurement of this reward, and updates its estimates.

\section{Algorithms and Guarantees}\label{sec:algorithm}
In this section, we propose a new algorithm (\PayToAddNoise, see Algorithm \ref{alg:Pay_all_Rounds}) that uses randomized payments to incentivize agents, enabling the platform to incur sub-linear regret. Essentially we identify a way to adapt and extend the non principal-agent setting of ~\cite{kannan18} to our platform-user interaction model. One way to reduce the cost that the platform incurs towards incentivization is to work with a special class of contexts (those having \emph{covariate diversity}, see Definition~\ref{def:covd}), which would provide \emph{exploration} of the arms naturally, leading to learning and low-regret. More specifically, in the contextual bandit setting of~\cite{kannan18}, the authors assume that a known perturbation (i.i.d. noise) is added to the contexts before they are picked up by the platform. They show that because of this perturbation the power of adversary (in choosing the contexts) is reduced and a myopic selection of arms enjoys sublinear regret (Theorem~\ref{thm:regret}).

In our setting, the choice of context at a given round is purely adversarial and we make no assumption on the contexts. Our key idea is to \emph{use payments to mimic perturbations}. We show that with the proposed payment scheme, \emph{covariate diversity} can be infused into our model, even if the arriving contexts are adversarial. Finally, we bound the expected cumulative payments in our scheme and show that it is sub-linear in $T$.

Our algorithm \PayToAddNoise{} is described in Alg.~\ref{alg:Pay_all_Rounds}. The key idea is to first generate perturbations that can satisfy the covariate diversity condition, and then transform these perturbations to a payment vector, which is then presented to the user. The user then myopically picks the best action, given these payments (one for each arm), ensuring fair compensation if this choice was different from their original choice. The platform updates the estimates of the selected arm's attribute vector by performing a regression while taking the payment information into account. As we show below, this approach enjoys sublinear (in horizon $T$) upper bounds on regret and the payment budget.

\begin{algorithm}
\SetKwFunction{CalcPayment}{CalcPayment}
\SetKwFunction{UpdateEstimate}{UpdateEstimate}
\SetKwFunction{InitialExploration}{InitialExploration}
\textbf{Input:} Arms: $\arms$, time horizon: $T$, and initial exploration parameter: $m$.\\
\InitialExploration{}\\
\For{ $t=m+1$ to $T$}{
Agent with context $\theta_t$ arrive at the Platform.\\
$\{p^t_{i}\}_{i \in \arms}$ = \CalcPayment{}.\\
Agent choose arm $\pi_t = \arg\max_i (\est_i^t.\theta_t \, + p^t_{i})$.\\
\UpdateEstimate{}\\
}
\SetKwProg{myproc}{Procedure}{}{}
\myproc{\CalcPayment{}}{
$p^t_i= \zeta_t. \est^t_i$, where $\zeta_t \sim \mathcal{N}(0,\sigma^2 I_d)$ for all arms.\\
}
\myproc{\UpdateEstimate{}}{
\textit{Updating History:}\\
\quad\quad $\Theta(S_{\pi_t,t+1}) = [\Theta(S_{\pi_t,t}) \vert (\theta_t+\zeta_t)] \; $ with $\zeta_t$ obtained above, and\\ \quad\quad $Y(S_{\pi_t,t+1}) = [Y(S_{\pi_t,t})\vert (\est_{\pi_t}.\theta_t+p^t_{\pi_t})]$.\\
\textit{Updating Parameter:}\\
\quad\quad $\est_{\pi_t}^{t+1} = (\Theta(S_{\pi_t,t})^T\Theta(S_{\pi_t,t}))^{-1}\Theta(S_{\pi_t,t})^TY(S_{\pi_t,t})$.\\
}
\caption{\PayToAddNoise}\label{alg:Pay_all_Rounds}
\end{algorithm}

\begin{lemma}\label{lem:paymentEquivalence}
In \PayToAddNoise{} (Algorithm ~\ref{alg:Pay_all_Rounds}), there exists a suitable payment for each arm such that $\arg\max_i (\est_{i}^t.\theta_t + p_i^t) = \arg\max_i \est_{i}^t.(\theta^{\circ}_t)$ for all $t>m$ ($m$ is the number of initial forced exploration rounds). And $\theta^{\circ}_t$ satisfies covariate diversity (Definition~\ref{def:covd}). Additionally, expected payments made by the platform are sub-linear in horizon $T$, specifically the average cumulative payments are $\mathcal{O}\left(N\sqrt{2T\log(NT)}\right)$.
\end{lemma}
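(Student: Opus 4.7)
The plan is to split the lemma into its three atomic assertions and handle each in turn. For the argmax equivalence I would define $\theta^{\circ}_t := \theta_t + \zeta_t$ and, using bilinearity of the inner product together with the definition $p^t_i = \zeta_t \cdot \hat{\mu}^t_i$ built into Algorithm \ref{alg:Pay_all_Rounds}, verify that $\hat{\mu}^t_i \cdot \theta^{\circ}_t = \hat{\mu}^t_i \cdot \theta_t + p^t_i$ for every arm $i$. Taking the argmax over $i$ on both sides is then immediate for every $t>m$. Since $\zeta_t \sim \mathcal{N}(0, \sigma^2 I_d)$ is an independent isotropic Gaussian, $\theta^{\circ}_t$ is precisely an adversarial context shifted by Gaussian smoothing, i.e.\ exactly the perturbation model of \cite{kannan18}; the covariate diversity of $\{\theta^{\circ}_t\}_{t>m}$ therefore follows from their analysis (Definition \ref{def:covd}) as a black-box reduction, with constants depending only on $\sigma$ and $d$.

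For the payment bound, I would first note that since $\lVert\hat{\mu}^t_i\rVert_2 \leq 1$, each scalar $p^t_i = \zeta_t \cdot \hat{\mu}^t_i$ is $\sigma^2$-sub-Gaussian. A standard Gaussian tail bound combined with a union bound over the $N$ arms and $T$ rounds yields $\max_{i,t}\lvert p^t_i\rvert \leq \sigma\sqrt{2\log(NT)}$ with high probability. Since the platform actually disburses $p^t_{\pi_t}$ at each round, a natural envelope for the cumulative payment is $\sum_t \max_i \lvert p^t_i\rvert$. I would control this either by summing the per-round maximal inequality $\mathbb{E}[\max_i \lvert p^t_i\rvert] = \mathcal{O}(\sigma\sqrt{\log N})$ over $t$, or by applying Azuma--Hoeffding to the martingale $\sum_t\bigl(p^t_{\pi_t} - \mathbb{E}[p^t_{\pi_t}\mid\mathcal{F}_{t-1}]\bigr)$ whose increments are bounded by the high-probability envelope above. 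With $\sigma$ tuned as in the surrounding regret analysis so that the per-round expected payment decays on the order of $1/\sqrt{T}$, this aggregation produces the claimed $\mathcal{O}\bigl(N\sqrt{T\log(NT)}\bigr)$ rate, with the $N$ factor originating from the max over arms at each step.

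The main obstacle I anticipate is the third part: the chosen arm $\pi_t$ is a measurable function of $\zeta_t$ (it is the argmax of a quantity containing $p^t_i$), so $p^t_{\pi_t}$ is \emph{not} mean-zero and a naive union-bound-then-sum gives only an $\mathcal{O}(T)$ guarantee. Overcoming this requires either the martingale decomposition sketched above, or a more refined coupling that exploits the fact that a non-trivial payment is actually disbursed only when the Gaussian perturbation flips the greedy action. Both routes ultimately reduce to standard Gaussian concentration, but matching the precise $\sqrt{T}$ scaling with the stated constant demands that the choice of $\sigma$ be coordinated with the regret bound used in Theorem \ref{thm:regret}; verifying that this coordinated choice is simultaneously compatible with the covariate diversity constants from part two is where the bookkeeping is most delicate.
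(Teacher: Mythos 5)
Your treatment of the first two claims (the argmax identity via bilinearity, and covariate diversity of $\theta_t+\zeta_t$ as a black-box reduction to the Gaussian-perturbation model of Kannan et al.) is exactly what the paper does, only spelled out more explicitly. For the payment bound, however, you and the paper part ways, and the divergence is instructive. The paper writes the cumulative payment to arm $i$ as $\sum_{t=1}^T \zeta_t\cdot\hat{\mu}^t_{i_t}$, declares each increment a mean-zero sub-Gaussian martingale difference with variance proxy $\lVert\hat{\mu}^t_i\rVert\le 1$, applies an Azuma-type tail bound to get $\sqrt{2T\log(1/\delta)}$ per arm, union-bounds over the $N$ arms, sums over arms to pick up the factor $N$, and sets $\delta'=1/T$ to produce the stated $\mathcal{O}(N\sqrt{2T\log(NT)})$. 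The obstacle you flag in your last paragraph --- that $\pi_t$ is a measurable function of $\zeta_t$, so the disbursed payment $\zeta_t\cdot\hat{\mu}^t_{\pi_t}$ is \emph{not} conditionally mean-zero --- is precisely the step the paper's proof silently skips over; the selection bias is real (e.g., when all $\hat{\mu}^t_i\cdot\theta_t$ coincide, the chosen arm maximizes $\zeta_t\cdot\hat{\mu}^t_i$ and the per-round expected payment is $\Theta(\sigma\sqrt{\log N})$), so you are right that a naive sum gives only $\mathcal{O}(T)$.

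That said, neither of your proposed repairs recovers the paper's bound as stated. The martingale decomposition $\sum_t\bigl(p^t_{\pi_t}-\mathbb{E}[p^t_{\pi_t}\mid\mathcal{F}_{t-1}]\bigr)$ controls the fluctuation at rate $\sqrt{T}$, but leaves the drift $\sum_t\mathbb{E}[p^t_{\pi_t}\mid\mathcal{F}_{t-1}]$, which in the worst case is $\Theta(\sigma T\sqrt{\log N})$ and does not vanish. Your alternative of tuning $\sigma\sim 1/\sqrt{T}$ kills the drift but is not what the algorithm or the paper does ($\sigma$ is a fixed input), and it would shrink the covariate-diversity eigenvalue $\lambda_{\circ}$ (which scales with $\sigma^2$) and thereby inflate the regret bound of Theorem~\ref{thm:regret}; the ``coordinated choice'' you hope for does not obviously exist. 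So your write-up correctly reproduces the paper's argument where the paper is sound, and where it deviates it exposes a genuine gap in the paper's proof rather than introducing one of its own --- but you should not present the $\mathcal{O}(N\sqrt{2T\log(NT)})$ rate as established by either route without an additional argument that bounds (or re-defines away, e.g., by netting penalties against payments and bounding the signed drift) the selection-bias term.
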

\begin{proof}
First, we make some observations. The platform can offer \textit{negative} payments implying users would incur some penalty if they select certain actions. Hence, the platform can influence the choice of the myopic user by providing a collection of payments and penalties (one for each arm). Enforcing payments as: $p^t_i = \zeta_t.\est^t_i$ where $\zeta_t \sim \mathcal{N}(0,\sigma^2I_d)$, ensures that the perceived context, $\theta_t+\zeta_t$ at any given round $t$ satisfies the \emph{covariate diversity condition}. Hence, in the proposed payments scheme, the platform pays a random payments vector $\mathbf{p}^t$ where each arm may receive a non-zero value, depending on the estimates $\est^t$.

The cumulative payments for an arm $i$ can be expressed as:
\begin{equation}\label{eq:cum_pay}
    Payment(T,i) = \sum_{t=1}^T\zeta_t.\est^t_{i_t},
\end{equation}
Notice that, $\zeta_t.\est^t_i$ is a sum of sub-Gaussian random variables as  $\zeta_t.\est^t_i = \sum_{l=1}^d \zeta^{(l)}_t.\est^{(l),t}_i$. Hence $\zeta_t.\est^t_i$ is a sub-Gaussian random variable with the variance-proxy parameter, $||\est^t_i||$. Since we assume that  $||\mu_i||\leq 1$, estimate (in our algorithm) $||\est^t_i||\leq 1$ as well. Thus we can use sub-Gaussian tail bounds to upper bound the absolute value of the payments in Eq. (\ref{eq:cum_pay}).
Consider the following standard tail bound for sub-Gaussian random variable:
\begin{lemma}
Let $Y_1,Y_2..Y_t$ be an s-sub-Gaussian martingale, i.e, each $Y_j$ is distributed as mean-0 and s-sub-Gaussian conditioned on $Y_1,..Y_{j-1}$. Then:
\begin{equation*}
    \mathbb{P}\left[\sum_{j=1}^t Y_j < \sqrt{2ts\log(1/\delta)}\right] > 1-\delta
\end{equation*}
\end{lemma}
Thus we bound the sum $\sum_{t=1}^T\zeta_t.\est^t_{i_t}$ with probability at least $1-\delta$ with the quantity:
\begin{equation}\label{eq:paymentbound}
    \sum_{t=1}^T\zeta_t.\est^t_{i_t} < \sqrt{2T\log(1/\delta)}.
\end{equation}
In Eq (\ref{eq:paymentbound}), we apply a union bound to obtain a bound for all arms $i\,\in\, \arms$ simultaneously with probability $1-\delta'$, as shown below:
\begin{equation*}
    \sum_{t=1}^T\zeta_t.\est^t_{i_t} < \sqrt{2T\log(N/\delta')}
\end{equation*}
Hence, the cumulative payments across all arms is upper bounded by:
\begin{equation*}
    \sum_{i=1}^N\,Payments(T,i) < N\sqrt{2T\log(N/\delta')}, 
\end{equation*}
with probability at least $1-\delta'$.
To realize the final bound we use $\delta'=1/T$.
\end{proof}

We now provide a  proof of the regret claim. First, we re-write the definition of \emph{covariate diversity} from~\cite{kannan18} as below.
\begin{definition}\label{def:covd}
For any distribution $\mathcal{D}$ with $\zeta\,\sim\,\mathcal{D}$ and $\zeta\,\in\,\mathbb{R}^d$ and $\theta^{\circ}_t := \theta_t+\zeta$, for any arbitrary $\theta_t\,\in\,\mathbb{R}^d$ such that: (a) if $\zeta$ is a "centrally bounded", i.e. $w.\zeta\leq r\,,\forall w:||w||\leq 1$ with high probability, and (b) if the minimum eigenvalue of the expected outer product $\mathbb{E}[\theta^{\circ}_t.(\theta^{\circ}_t)^T]$ is lower bounded, i.e: 
\begin{equation*}
    \lambda_{\min}\left[\mathbb{E}\left[\theta^{\circ}_t.(\theta^{\circ}_t)^T\right]\right] \geq \lambda_{\circ},
\end{equation*}
then, the perturbed context, $\theta^{\circ}_t$ has \emph{covariate diversity}.
\end{definition}
\begin{remark}
In the Algorithm~\ref{alg:Pay_all_Rounds}, an agent makes a choice after receiving the payment vector from the platform and hence to the platform, the perceived context $\theta^{\circ}_t$ has Gaussian ("centrally bounded" distribution) perturbation baked-in providing co-variate diversity to the context. Such a condition on the context implies that there is non-trivial variance in all dimensions and intuitively such an arrangement allows convergence of the least square estimator of arm attributes.
\end{remark}

Since (a) the payments scheme proposed in the proof of Lemma~\ref{lem:paymentEquivalence} establishes \emph{covariate diversity}, and (b) in the Algorithm \ref{alg:Pay_all_Rounds}, we update history with perturbed contexts, it is intuitive to see that the regret upper bound of Theorem 4.1 of \cite{kannan18} (derived in the non principal-agent setting) also applies here.
\begin{theorem}\label{thm:regret}
With an appropriate initial exploration (parameterized by $m$), \PayToAddNoise{} has the following regret upper bound with probability at least $1-\delta''$:
\begin{equation*}
    R(T) \leq \mathrm{\tilde{O}}\left(\sqrt{TN}\log\left(TN\right)^{3/2}\right),
\end{equation*}
where the notation $\mathrm{\tilde{O}}(.)$ hides dependence on instance specific parameters and $\delta''$.
\end{theorem}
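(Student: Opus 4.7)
The plan is to reduce the principal-agent setting of Algorithm~\ref{alg:Pay_all_Rounds} to the non principal-agent setting of~\cite{kannan18}, and then invoke their Theorem 4.1 essentially verbatim. The bridge is Lemma~\ref{lem:paymentEquivalence}: under the scheme $p_i^t=\zeta_t\cdot\est_i^t$, the myopic rule $\arg\max_i(\est_i^t\cdot\theta_t+p_i^t)$ coincides with $\arg\max_i \est_i^t\cdot\theta^{\circ}_t$, where $\theta^{\circ}_t:=\theta_t+\zeta_t$ and $\zeta_t\sim\mathcal{N}(0,\sigma^2 I_d)$. Combined with the fact that \textsc{UpdateEstimate} records the perturbed contexts $\theta_t+\zeta_t$ as the rows of $\Theta(S_{\pi_t,\cdot})$, the algorithm is operationally a greedy contextual bandit executed on the perturbed context stream $\{\theta^{\circ}_t\}$.

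Given the reduction, I would verify that $\theta^{\circ}_t$ meets both conditions of covariate diversity (Definition~\ref{def:covd}). For central boundedness, $w\cdot\zeta_t\sim\mathcal{N}(0,\sigma^2\|w\|^2)$ for any unit $w$, so Gaussian tail bounds followed by a union bound over the $T$ rounds give $|w\cdot\zeta_t|\leq r:=\sigma\sqrt{2\log(T/\delta)}$ uniformly with probability $1-\delta$. For the minimum-eigenvalue clause, independence of $\zeta_t$ from the adversary's $\theta_t$ yields $\mathbb{E}[\theta^{\circ}_t(\theta^{\circ}_t)^T]=\theta_t\theta_t^T+\sigma^2 I_d\succeq \sigma^2 I_d$, so $\lambda_{\circ}=\sigma^2$ may be taken uniformly in $t$, regardless of adversarial choices.

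With covariate diversity established, I would transport the regret decomposition of Theorem 4.1 of~\cite{kannan18} step by step: (i) after $m=\mathrm{\tilde{O}}(d\log T/\lambda_{\circ}^2)$ rounds of initial forced exploration, each arm $i$ has a well-conditioned Gram matrix $\Theta(S_{i,t})^T\Theta(S_{i,t})\succeq \Omega(\lambda_{\circ}|S_{i,t}|)I_d$; (ii) standard sub-Gaussian least-squares concentration then yields $\|\est_i^t-\mu_i\|=\mathrm{\tilde{O}}(\sqrt{d/|S_{i,t}|})$ with high probability; (iii) the parameters $(r,\lambda_{\circ})$ feed into the stability lemma of~\cite{kannan18} to guarantee $|S_{i,t}|=\Omega(t/N)$ for every arm with high probability, so that no arm is starved under greedy play on perturbed contexts; (iv) bounding the instantaneous regret by $r_t\leq 2\max_i\|\est_i^t-\mu_i\|\cdot\|\theta_t\|$ and summing over $t$ produces $\mathrm{\tilde{O}}(\sqrt{TN}\log(TN)^{3/2})$.

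The main obstacle will be step (iii): porting the coverage/stability lemma of~\cite{kannan18} faithfully to our principal-agent setting. Although the effective perturbation $\zeta_t$ is injected through payments rather than exogenously, it is still an independent $\mathcal{N}(0,\sigma^2 I_d)$ draw per round, and crucially it is sampled \emph{after} the platform commits $\est^t$ but \emph{before} the agent acts, so the adapted-filtration requirement of the original coverage argument is preserved. A secondary subtlety is the joint calibration of $\sigma$ and $m$: $\sigma$ must be large enough that $\lambda_{\circ}=\sigma^2$ dominates the estimation noise in the stability bound, yet small enough that the cumulative payment bound of Lemma~\ref{lem:paymentEquivalence} and the implicit constants in the regret remain as stated; a constant $\sigma=\Theta(1)$ together with $m=\Theta(d\log T)$ suffices, with the dependence absorbed into the $\mathrm{\tilde{O}}(\cdot)$ notation hiding instance-specific parameters and $\delta''$.
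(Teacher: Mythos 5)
Your proposal takes essentially the same route as the paper: show that the payment scheme makes the perceived contexts $\theta_t+\zeta_t$ satisfy covariate diversity (Definition~\ref{def:covd}), observe that the algorithm records these perturbed contexts in its regression history, and then invoke Theorem 4.1 of \cite{kannan18} for greedy play on the perturbed stream. Your writeup is in fact considerably more detailed than the paper's, which dispatches the theorem with a one-sentence appeal to \cite{kannan18} immediately after Lemma~\ref{lem:paymentEquivalence}.
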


\begin{remark}
Note that for the regret guarantee to hold, Algorithm \ref{alg:Pay_all_Rounds} must have an initial exploration phase, during which the agents are made to play arms uniformly at random or in a round-robin fashion. Intuitively, this warm-start is required to build up robustness of estimates against adversarial contexts. 
\end{remark}

\subsection{Other Payments scheme and Lower Bound}
In the previous section, we established a payments scheme with bounded cumulative cost to the platform that also allowed for sub-linear regret without any additional assumption on the the instance or the adversarial choice of the contexts. It is natural to ask the following question: does there exist a payments scheme which is even more frugal for the platform (i.e., costs less) and still ensures sub-linear regret? Could there be a principal-agent setting where initial exploration is not needed? The first question has been partially addressed before. In~\cite{chen2018incentivizing}, the authors show that only a constant (in $T$) total amount of payment is required for a sub-linear regret bound. However, in their model the platform knows the distribution of the contexts as well as views the context of the arriving agent before deciding on the payments, this is in addition to the heterogeneity assumption on the contexts, which is equivalent to the \emph{covariate diversity} described above. In~\cite{kannan2017fairness}, the authors presents a randomized algorithm which does not need any initial exploration phase as the exploration is baked-into the randomization. Their scheme, however requires that the agents and the platform maintain the estimate of the arm attributes using a ridge estimator. 

In the previous section and in the above works, cumulative payment scales up with instance parameters. We claim that, this is essential if we ought to perform better than a vanilla explore-then-commit strategy\footnote{In a typical explore-then-commit learning strategy, there is an initial pure exploration phase by the end of which the learner commits to a single best action till the end of the horizon $T$~\cite{langford2007epoch}}, as shown in the following lemma.
\begin{lemma}
    Consider $\mathcal{A}$ to be the set of all explore-then-commit algorithms (without incentivization) for the contextual bandit that does not make any addition assumptions on the instance or the contexts. With a restricted upper cap $B$ on the cumulative payments budget, no algorithm can do better than the best algorithm in the set $\mathcal{A}$ even with an initial exploration.
\end{lemma}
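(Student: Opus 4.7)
The plan is to show that capping cumulative payments at a constant $B$ forces any algorithm to behave, up to regret, like some explore-then-commit strategy in $\mathcal{A}$. The core observation is that to change a myopic user's choice, a payment must overcome the perceived-utility gap between the currently myopically-best arm and the arm one wishes to steer the user towards. Because contexts are fully adversarial, the adversary can present $\theta_t$ aligned so that this gap $\theta_t\cdot(\est^t_{i^*}-\est^t_{j})$ is at least some constant $\gamma>0$ for every candidate flip. Hence each ``choice-flipping'' round costs the platform at least $\gamma$, and the number of such rounds is at most $B/\gamma$, a constant independent of $T$.

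Next, I would partition every algorithm's play into an \emph{effective exploration} set $E\subseteq[T]$ (rounds in which the payment actually changes the pulled arm) and its complement (rounds in which either no payment is offered or the payment does not alter the myopic choice). The previous step bounds $|E|\le B/\gamma$. On rounds outside $E$, the pulled arm is exactly $\arg\max_j \theta_t\cdot\est^t_j$, which is the same selection rule used in the commit phase of an ETC algorithm running on the same estimator. I would then construct an adversarial sequence of contexts that, once the budget is depleted, keeps reselecting a fixed suboptimal arm under the current $\est^t$, making the post-budget phase accumulate regret at the same asymptotic rate as an ETC commit phase on a matched hard instance.

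Finally, I would argue that the exploration rounds in $E$ can be no more informative than a round-robin pull schedule of length $|E|$: the adversary is free to choose, on each flipped round, a context that matches what a round-robin ETC policy would have seen during its exploration phase (e.g.\ by reusing a fixed basis of contexts). Thus, for every budget-capped algorithm there exists an algorithm in $\mathcal{A}$ that pulls each arm a comparable number of times in its exploration phase and commits thereafter, incurring regret no larger. Taking the best algorithm in $\mathcal{A}$ under this adversary yields the desired inequality.

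The main obstacle will be making the three couplings rigorous simultaneously: (i)~forcing every choice-flip to cost $\Omega(1)$ under the adversary, (ii)~ensuring the exploration samples in $E$ contain no information that a round-robin ETC phase would miss, and (iii)~ensuring the post-budget commit phase on the adversarial instance matches the regret of an ETC commit phase. The cleanest route is a two-arm, low-dimensional hard instance where the true attribute vectors differ along a single direction $v$; the adversary uses contexts orthogonal to $v$ outside of $E$ (so no additional learning can occur without payments) and aligns contexts with $v$ inside $E$ at the minimum cost $\gamma$ needed for a flip. This explicit construction should let one read off both the information budget and the payment budget of any algorithm in $\mathcal{A}$ that matches it.
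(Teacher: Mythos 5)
Your overall instinct---use the adversary's freedom in choosing contexts to neutralize a bounded payment budget---is the same one the paper relies on, but the paper's argument is far more direct and your extra machinery introduces gaps that the paper's version does not have. The paper's proof simply exhibits, at the first round after initial exploration, a context $\theta'$ with $(\hat{\mu}_1-\hat{\mu}_2)\cdot\theta' > B$, i.e.\ the perceived-utility gap exceeds the \emph{entire} remaining budget, and then has the adversary replay that context until $T$. Consequently no payment ever flips a choice, your set $E$ is empty, the algorithm's trajectory coincides with that of a no-payment algorithm, and the comparison to the best member of $\mathcal{A}$ is immediate. By contrast, you only secure a per-flip cost of $\gamma$, which leaves up to $B/\gamma$ flipped rounds to account for, and it is precisely in accounting for them that your argument breaks down.

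Two concrete problems. First, your hard instance is self-defeating: if the adversary plays contexts orthogonal to $v=\mu_1-\mu_2$ outside of $E$ to block learning, then on those rounds $\theta_t\cdot\mu_1=\theta_t\cdot\mu_2$ and the instantaneous regret $\max_j\theta_t\cdot\mu_j-\theta_t\cdot\mu_{i_t}$ is zero, so the post-budget ``commit phase'' accumulates no regret on that instance and you cannot conclude that the algorithm does as badly as an ETC commit phase on a hard instance. To generate regret the adversary must put mass along $v$, but then those rounds are informative, and you are back to needing a genuine information-theoretic tradeoff rather than a coupling. Second, the set $E$ is determined endogenously by the algorithm's (possibly randomized) payment choices interacting with the adversary's contexts; the adversary cannot dictate which rounds land in $E$, so the claim that the samples gathered in $E$ can be matched to what a round-robin ETC exploration phase ``would have seen'' is not something you can enforce, and the quantifier order in your final step (for every budget-capped algorithm there exists a dominating member of $\mathcal{A}$ under a common adversary) is left unjustified. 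If you adopt the paper's observation that a single context can make the gap exceed all of $B$ at once (which, note, implicitly requires $B$ smaller than the attainable gap, a caveat worth stating), all of this machinery becomes unnecessary.
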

\begin{proof}
Firstly, we make an observation that the best algorithm (denoted by \textit{Alg}) in the set $\mathcal{A}$: it has the best regret guarantee of all  algorithms that do not explicitly incentivize by payments and have an initial exploration phase. Consider an instance with two arms and let $t$ be the first round after the initial exploration phase. Let $\hat{\mu}_1$ and $\hat{\mu}_2$ be the corresponding estimates of the arm attributes, visible to the arriving agents on the platform. As the agent arrival is purely adversarial,  $\exists$ context $\theta'$, such that $(\hat{\mu}_1-\hat{\mu}_2)\cdot \theta' > B$. Further, if the adversary opts for this context for all the following rounds till $T$, then incentivizing through payments is fruitless. This is because, the fixed budget $B$ is too less to induce any change to the myopic behavior of the agents. Hence, in fixed budget regimes, \textit{Alg} has the best regret guarantee.
\end{proof}

\section{Simulations}\label{sec:simulation}

In this section we compare the learning performance (regret) and payment requirements for our proposed strategy Algorithm \ref{alg:Pay_all_Rounds} and other standard baselines for both synthetic and real datasets. For ease of referencing we name the algorithms as: (1) \PayToAddNoise~(Algorithm \ref{alg:Pay_all_Rounds}); (2) \PaymentProfile~(an algorithm in which the platform provides as much payment as required so that the myopic agents choose arms as if they are deploying LinUCB~\cite{li2010contextual}); (3) \UnrestrictedBudget~(an algorithm based on the chaining method of~\cite{kannan2017fairness}); (4) \LimitedBudget~(an instance of the algorithm \UnrestrictedBudget with a fixed upper cap on the total cumulative payments) and (5) \textsc{NoPayments}~(the platform is passive and agents make myopic choice without any influence).

In our first experiment, the contexts are drawn from a multivariate Gaussian distribution with a non-zero mean. We set the number of arms to be $|\arms|=8$, the context dimension as $d=4$, and the time horizon as $T=800$, while averaging over $10$ Monte Carlo runs (refer to Fig~\ref{fig:1}). The \textsc{NoPayments} strategy, i.e., where the platform has no control on exploration, perform very well and has a sub-linear regret. However, in our simulation studies its performance was consistently surpassed by other algorithms, especially \PaymentProfile~with LinUCB as the underlying strategy. One interesting result (which is also observed in the next experiment) is that \PayToAddNoise~has good performance in terms of payments required to ensure sub-linear regret. This reinforces our theoretical guarantees for the same (see Lemma~\ref{lem:paymentEquivalence}, where upper bounds on the expected total payments were stated). On the other hand, LinUCB (implemented within~\PaymentProfile) incurred large incentivization costs in these synthetic principal-agent instances.

\begin{figure}
\centering
\includegraphics[width=.45\columnwidth]{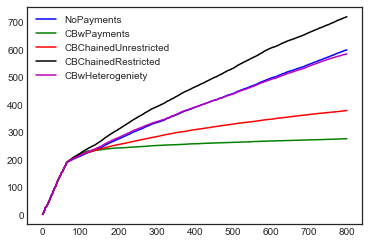}
\includegraphics[width=.45\columnwidth]{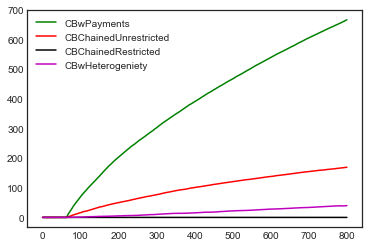}
\caption{Left plot shows cumulative regret, right shows the total payments made by various algorithms. In both plots, x-axis is the time horizon and y-axis represents either cumulative regret or cumulative  payments made.}
\label{fig:1}
\end{figure}

Next, we use the same experimental setup as before, but use a publicly available data set to mimic arm attribute learning: the \href{https://www.openml.org/d/1471}{EEG} data set from the \href{https://www.openml.org}{OpenML} platform. This data set contains $14$-dimensional feature vectors with two possible class labels ($|\arms| = 2$). We use this classification instance to generate contexts and assign rewards. We standardize the feature vectors as a pre-processing step. Taking the time horizon as $T=2500$, we randomize the arrival of contexts and  report results averaged over $10$ Monte Carlo runs (refer Fig~\ref{fig:1}). Interestingly, the \textsc{NoPayments} strategy performs very well, followed by the payment based schemes (note that our algorithm is quite competitive in this setting and has regret and payment guarantees while \textsc{NoPayments} does not without addtional assumptions). 

\begin{figure}
\centering
\includegraphics[width=.45\columnwidth]{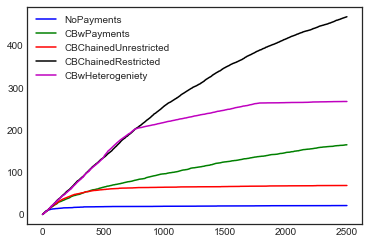}
\includegraphics[width=.45\columnwidth]{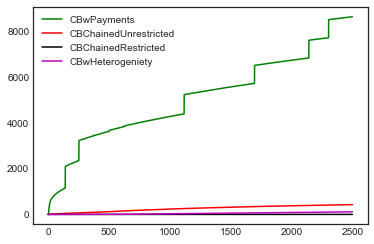}
\caption{Left plot shows the cumulative regret, right shows the total payments made by various algorithms. In both, x-axis is the time horizon and y-axis represents either cumulative regret or cumulative total payments.}
\label{fig:2}
\end{figure}

\section{Conclusion}
In this paper, we studied the principal-agent variants of online learning under the contextual bandit framework, where a platform sends recommendations and users act on those that are most valuable to them, and the platform can use payments to incentivize exploration and fasten learning. 

This paper is among only a handful of recent works which have tackled the the problem of incentivization/recommendation in principal-agent settings, hence several fruitful avenues for extending this initial foray remain.
\begin{itemize}
    \item In Algorithm \ref{alg:Pay_all_Rounds}, platform uses payments to infuse heterogeneity in the arriving contexts. It is easy to ensure sub-linear regret with $\Omega(T)$ payments. Similarly, if the allowed regret is upto $\mathrm{O}(T)$, the platform does not need to pay at all. It would an interesting problem to calculate lower bounds on payments required for a reasonable regret guarantee.
    \item It seems to be the case that notions such as covariate diversity may be necessary for unbiased estimation of arm attributes. Hence, a study which ties together the efficacy of various algorithms (including ours) to covariate diversity in the contexts could be an interesting contribution in the incentivized exploration literature.
    \item Although assuming myopic behavior of the agents is an intuitive modeling choice, it may not cover all the practical possibilities. Hence, extending algorithm design and analysis to situations where the agents are non-myopic, for instance, they are anticipating payments, are partially observed, or are governed by a rich discrete choice model. All these would also be of significant interest.
    \item More complex user behaviors can modeled if the platform can inform the estimate of each arm's attributes along with their variance. This can better inform the users, especially the ones that are risk-averse.
\end{itemize}

\bibliographystyle{plain}
\bibliography{pay2explore}

\end{document}